\newcommand{\inner}[1]{\langle #1 \rangle}
\newcommand{\reals}{\mathbb{R}}
\newcommand{\sign}{\mathrm{sign}}
\newcommand{\w}[1]{w^{(#1)}}
\DeclareMathOperator*{\E}{\mathbb{E}}
\DeclareMathOperator*{\prob}{\mathbb{P}}
\newtheorem{theorem}{Theorem}
\newtheorem{lemma}{Lemma}
\newcommand{\BlackBox}{\rule{1.5ex}{1.5ex}}  
\newenvironment{proof}{\par\noindent{\bf Proof\ }}{\hfill\BlackBox\\}
\newcommand{\lemref}[1]{Lemma~\ref{#1}}
\newcommand{\figref}[1]{Figure~\ref{#1}}
\newcommand{\tblref}[1]{Table~\ref{#1}}
\newcommand{\appref}[1]{Appendix~\ref{#1}}
\renewcommand{\eqref}[1]{Equation~\ref{#1}}
\title{Decoupling ``when to update'' from ``how to update''}
\author{
  Eran Malach
  }
\author{
  Shai Shalev-Shwartz
  }
\affil{School of Computer Science, The Hebrew University, Israel}
\date{}
\begin{document} 

\maketitle

\begin{abstract} 
  Deep learning requires data. A useful approach to obtain data is to
  be creative and mine data from various sources, that were created
  for different purposes. Unfortunately, this approach often leads to
  noisy labels. In this paper, we propose a meta algorithm for
  tackling the noisy labels problem. The key idea is to decouple
  ``when to update'' from ``how to update''. We demonstrate the
  effectiveness of our algorithm by mining data for gender
  classification by combining the Labeled Faces in the Wild (LFW) 
  face recognition dataset with a
  textual genderizing service, which leads to a noisy dataset. While
  our approach is very simple to implement, it leads to
  state-of-the-art results. We analyze some convergence properties of
  the proposed algorithm.
\end{abstract} 

\section{Introduction}

In recent years, deep learning achieves state-of-the-art results in
various different tasks, however, neural networks are mostly trained
using supervised learning, where a massive amount of labeled data is
required. While collecting unlabeled data is relatively easy given the
amount of data available on the web, providing accurate labeling is
usually an expensive task. In order to overcome this problem, data
science becomes an art of extracting labels out of thin air. Some
popular approaches to labeling are crowdsourcing, where the labeling
is not done by experts, and mining available meta-data, such as text
that is linked to an image in a webpage. Unfortunately, this gives
rise to a problem of abundant noisy labels - labels may often be
corrupted \cite{ipeirotis2010quality}, which might deteriorate
the performance of neural-networks \cite{flatowrobustness}.

Let us start with an intuitive explanation as to why noisy labels are
problematic. Common neural network optimization algorithms start with
a random guess of what the classifier should be, and then iteratively
update the classifier based on stochastically sampled examples from a
given dataset, optimizing a given loss function such as the hinge loss
or the logistic loss. In this process, wrong predictions lead to an
update of the classifier that would hopefully result in better
classification performance. While at the beginning of the training
process the predictions are likely to be wrong, as the classifier
improves it will fail on less and less examples, thus making fewer and
fewer updates. On the other hand, in the presence of noisy labels, as
the classifier improves the effect of the noise increases - the
classifier may give correct predictions, but will still have to update
due to wrong labeling. Thus, in an advanced stage of the training
process the majority of the updates may actually be due to wrongly
labeled examples, and therefore will not allow the classifier to
further improve.

To tackle this problem, we propose to decouple the decision of ``when
to update'' from the decision of ``how to update''.  As mentioned
before, in the presence of noisy labels, if we update only when the
classifier's prediction differs from the available label, then at the
end of the optimization process, these few updates will probably be
mainly due to noisy labels. We would therefore like a different update
criterion, that would let us decide whether it is worthy to update the
classifier based on a given example. We would like to preserve the
behavior of performing many updates at the beginning of the training
process but only a few updates when we approach convergence.  To do
so, we suggest to train two predictors, and perform update steps only
in case of disagreement between them. This way, when the predictors
get better, the ``area'' of their disagreement gets smaller, and
updates are performed only on examples that lie in the disagreement
area, therefore preserving the desired behavior of the standard
optimization process.  On the other hand, since we do not perform an
update based on disagreement with the label (which may be due to a
problem in the label rather than a problem in the predictor), this
method keeps the effective amount of noisy labels seen throughout the
training process at a constant rate.

The idea of deciding ``when to update'' based on a disagreement
between classifiers is closely related to approaches for active
learning and selective sampling - a setup in which the learner does
not have unlimited access to labeled examples, but rather has to query
for each instance's label, provided at a given cost (see for example
\cite{settles2010active}).  Specifically, the well known
query-by-committee algorithm maintains a version space of hypotheses
and at each iteration, decides whether to query the label of a given
instance by sampling two hypotheses uniformly at random from the
version space~\cite{seung1992query,freund1997selective}.  Naturally,
maintaining the version space of deep networks seems to be
intractable. Our algorithm maintains only two deep networks. The
difference between them stems from the random
initialization. Therefore, unlike the original query-by-committee
algorithm, that samples from the version space at every iteration, we
sample from the original hypotheses class only once (at the
initialization), and from there on, we update these two hypotheses
using the backpropagation rule, when they disagree on the label. To
the best of our knowledge, this algorithm was not proposed/analyzed
previously, not in the active learning literature and especially not
as a method for dealing with noisy labels.

To show that this method indeed improves the robustness of deep
learning to noisy labels, we conduct an experiment that aims to study
a real-world scenario of acquiring noisy labels for a given dataset.
We consider the task of gender classification based on images. We did
not have a dedicated dataset for this task. Instead, we relied on the
Labeled Faces in the Wild (LFW) dataset, which contains images of
different people along with their names, but with no information about
their gender.  To find the gender for each image, we use an online
service to match a gender to a given name {(}as is suggested by
\cite{masek2015evaluation}{)}, a method which is naturally prone to
noisy labels (due to unisex names). Applying our algorithm to an
existing neural network architecture reduces the effect of the noisy
labels, achieving better results than similar available approaches,
when tested on a clean subset of the data. We also performed a
controlled experiment, in which the base algorithm is the perceptron,
and show that using our approach leads to a noise resilient algorithm,
which can handle an extremely high label noise rates of up to $40\%$. The controlled experiments are detailed in \appref{app:experiments}.

In order to provide theoretical guarantees for our meta algorithm, we
need to tackle two questions: 1. does this algorithm converge? and if
so, how quickly?  and 2. does it converge to an optimum? We give a
positive answer to the first question, when the base algorithm is the
perceptron and the noise is label flip with a constant
probability. Specifically, we prove that the expected number of
iterations required by the resulting algorithm equals (up to a
constant factor) to that of the perceptron in the noise-free
setting. As for the second question, clearly, the convergence depends
on the initialization of the two predictors. For example, if we
initialize the two predictors to be the same predictor, the algorithm
will not perform any updates. Furthermore, we derive lower bounds on
the quality of the solution even if we initialize the two predictors
at random. In particular, we show that for some distributions, the
algorithm's error will be bounded away from zero, even in the case of
linearly separable data. This raises the question of whether a better
initialization procedure may be helpful. Indeed, we show that for the
same distribution mentioned above, even if we add random label noise,
if we initialize the predictors by performing few vanilla perceptron
iterations, then the algorithm performs much better.  Despite this
worst case pessimism, we show that empirically, when working with
natural data, the algorithm converges to a good solution. We leave a
formal investigation of distribution dependent upper bounds to future
work.

\section{Related Work}


The effects of noisy labels was vastly studied in many different
learning algorithms (see for example the survey
in~\cite{frenay2014classification}), and various solutions to this
problem have been proposed, some of them with theoretically provable
bounds, including methods like statistical queries, boosting, bagging
and more
\cite{kearns1998efficient,mcdonald2003empirical,bootkrajang2012label,bootkrajang2013boosting,
  natarajan2013learning,patrini2016loss,larsen1998design,menon2016learning,awasthi2014power}.
Our focus in this paper is on the problem of noisy labels in the
context of deep learning. Recently, there have been several works
aiming at improving the resilience of deep learning to noisy labels.
To the best of our knowledge, there are four main approaches. The
first changes the loss function. The second adds a layer that tries to
mimic the noise behavior. The third groups examples into buckets.
The fourth tries to clean the data as a preprocessing step.  Beyond
these approaches, there are methods that assume a small clean data set
and another large, noisy, or even unlabeled, data set
\cite{nigam2000analyzing,blum1998combining,zhu2005semi,ando2007two}.
We now list some specific algorithms from these families.


\cite{reed2014training} proposed to change the cross entropy loss
function by adding a regularization term that takes into account the
current prediction of the network. This method is inspired by a
technique called minimum entropy regularization, detailed in
\cite{grandvalet2004semi,grandvalet2006entropy}. It was also found to
be effective by \cite{flatowrobustness}, which suggested a further
improvement of this method by effectively increasing the weight of the
regularization term during the training procedure.

\cite{mnih2012learning} suggested to use a probablilstic model that
models the conditional probability of seeing a wrong label, where the
correct label is a latent variable of the model. While
\cite{mnih2012learning} assume that the probability of label-flips
between classes is known in advance, a follow-up work by
\cite{sukhbaatar2014training} extends this method to a case were these
probabilities are unknown.  An improved method, that takes into
account the fact that some instances might be more likely to have a
wrong label, has been proposed recently in
\cite{goldberger2016training}.  In particular, they add another
softmax layer to the network, that can use the output of the last
hidden layer of the network in order to predict the probability of the
label being flipped. Unfortunately, their method involves optimizing
the biases of the additional softmax layer by first training it on a
simpler setup (without using the last hidden layer), which implies
two-phase training that further complicates the optimization process.
It is worth noting that there are some other works that suggest
methods that are very similar to
\cite{sukhbaatar2014training,goldberger2016training}, with a slightly
different objective or training method
\cite{bekker2016training,kakar2015probabilistic}, or otherwise suggest
a complicated process which involves estimation of the class-dependent
noise probabilities \cite{patrini2016making}. Another method from the
same family is the one described in \cite{xiao2015learning}, who
suggests to differentiate between ``confusing'' noise, where some
features of the example make it hard to label, or otherwise a
completely random label noise, where the mislabeling has no clear
reason.

\cite{zhuang2016attend} suggested to train the network to predict
labels on a randomly selected group of images from the same class,
instead of classifying each image individually.  In their method, a
group of images is fed as an input to the network, which merges their
inner representation in a deeper level of the network, along with an
attention model added to each image, and producing a single
prediction. Therefore, noisy labels may appear in groups with correctly
labeled examples, thus diminishing their impact. The final setup is
rather complicated, involving many hyper-parameters, rather than
providing a simple plug-and-play solution to make an existing
architecture robust to noisy labels. 

From the family of preprocessing methods, we mention
\cite{barandela2000decontamination,brodley1999identifying}, that try
to eliminate instances that are suspected to be mislabeled. Our method
shares a similar motivation of disregarding contaminated instances,
but without the cost of complicating the training process by a
preprocessing phase.

In our experiment we test the performance of our method against
methods that are as simple as training a vanilla version of neural
network. In particular, from the family of modified loss function we
chose the two variants of the regularized cross entropy loss suggested
by \cite{reed2014training} (soft and hard bootstrapping). From the
family of adding a layer that models the noise, we chose to compare to
one of the models suggested in \cite{goldberger2016training} (which is
very similar to the model proposed by \cite{sukhbaatar2014training}),
because this model does not require any assumptions or complication of
the training process.  We find that our method outperformed 
all of these competing methods, while being extremely simple to implement.

Finally, as mentioned before, our ``when to update'' rule is closely
related to approaches for active learning and selective sampling, and
in particular to the query-by-committee algorithm. In
\cite{freund1997selective} a thorough analysis is provided for various
base algorithms implementing the query-by-committee update rule, and
particularly they analyze the perceptron base algorithm under some
strong distributional assumptions.  In other works, an ensemble of
neural networks is trained in an active learning setup to improve the
generalization of neural networks
\cite{cohn1994improving,atlas1989training,krogh1995neural}. Our method
could be seen as a simplified member of ensemble methods. As mentioned
before, our motivation is very different than the active learning
scenario, since our main goal is dealing with noisy labels, rather
than trying to reduce the number of label queries. To the best of our
knowledge, the algorithm we propose was not used or analyzed in the
past for the purpose of dealing with noisy labels in deep learning.


\section{Method}
As mentioned before, to tackle the problem of noisy labels, we suggest
to change the update rule commonly used in deep learning optimization
algorithms in order to decouple the decision of ``when to update''
from ``how to update''.  In our approach, the decision of ``when to
update'' does not depend on the label. Instead, it depends on a
disagreement between two different networks.  This method could be
generally thought of as a meta-algorithm that uses two base
classifiers, performing updates according to a base learning
algorithm, but only on examples for which there is a disagreement
between the two classifiers.

To put this formally, let $\mathcal{X}$ be an instance space and $\mathcal{Y}$
be the label space, and assume we sample examples from
a distribution $\tilde{\mathcal{D}}$ over $\mathcal{X}\times\mathcal{Y}$,
with possibly noisy labels.
We wish to train a classifier $h$, coming from a hypothesis class
$\mathcal{H}$. We rely on an update rule, $U$, that updates $h$ based
on its current value as well as a mini-batch
of $b$ examples. The meta algorithm receives as input a pair of two
classifiers, $h_1,h_2\in\mathcal{H}$, the update rule, $U$, and a mini
batch size, $b$. A pseudo-code is given in Algorithm 1.

Note that we do not specify how to initialize the two base
classifiers, $h_1,h_2$.  When using deep learning as the base
algorithm, the easiest approach is maybe to perform a random
initialization.  Another approach is to first train the two
classifiers while following the regular ``when to update'' rule (which
is based on the label $y$), possibly training each classifier on a
different subset of the data, and switching to the suggested update
rule only in an advanced stage of the training process. We later show
that the second approach is preferable.

At the end of the optimization process, we can simply return one of
the trained classifiers.  If a small accurately labeled test data is
available, we can choose to return the classifier with the
better accuracy on the clean test data.

\begin{algorithm}
   \caption{Update by Disagreement}
\begin{algorithmic}
  \STATE \textbf{input}: 
\begin{ALC@g}
  \STATE an update rule $U$
  \STATE batch size $b$
  \STATE two initial predictors $h_{1},h_{2}\in\mathcal{H}$
\end{ALC@g}
   \FOR{$t=1,2,\dots,N$ }
   \STATE draw mini-batch $(x_1,y_1),\ldots,(x_b,y_b) \sim
   \tilde{\mathcal{D}}^b$
   \STATE let $S = \{(x_i,y_i) : h_1(x_i) \ne h_2(x_i)\}$
   \STATE $h_1 \leftarrow U(h_1,S)$
   \STATE $h_2 \leftarrow U(h_2,S)$
   \ENDFOR
\end{algorithmic}
\end{algorithm}

\section{Theoretical analysis}

Since a convergence analysis for deep learning is beyond our reach
even in the noise-free setting, we focus on analyzing properties of
our algorithm for linearly separable data, which is corrupted by
random label noise, and while using the perceptron as a base
algorithm.

Let $\mathcal{X} = \{ x \in \mathbb{R}^d : \|x\| \le 1\}$,
$\mathcal{Y}=\left\{ \pm1\right\} $, and let $\mathcal{D}$ be a
probability distribution over $\mathcal{X}\times\mathcal{Y}$, such
that there exists $w^*$ for which $\mathcal{D}(\{(x,y) :
y\inner{w^*,x} < 1\}) = 0$. The distribution we observe, denoted
$\tilde{\mathcal{D}}$, is a noisy version of
$\mathcal{D}$. Specifically, to sample $(x,\tilde{y}) \sim
\tilde{\mathcal{D}}$ one should sample $(x,y)\sim\mathcal{D}$ and
output $(x,y)$ with probability $1-\mu$ and $(x,-y)$ with probability
$\mu$. Here, $\mu$ is in $[0,1/2)$. 

Finally, let $\mathcal{H}$ be the class of linear classifiers, namely, 
$
\mathcal{H} = \{ x \mapsto \sign(\inner{w,x}) : w \in \reals^d \}
$.
We use the perceptron's update rule with mini-batch size of $1$. That
is, given the classifier $w_t\in \mathbb{R}^d$, the update on
example $(x_t,y_t)\in\mathcal{X}\times\mathcal{Y}$ is:
$
w_{t+1} = U(w_t,(x_t,y_t)) :=  w_t + y_t \, x_t
$.

As mentioned in the introduction, to provide a full theoretical analysis
of this algorithm, we need to account for two questions:
\begin{enumerate}
\item{does this algorithm converge? and if so, how quickly?}
\item{does it converge to an optimum?}
\end{enumerate}

Theorem 1 below provides a positive answer for the first question. 
It shows that the number of updates of our algorithm is only larger by
a constant factor (that depends on the initial vectors and the amount
of noise) relatively to the bound for the vanilla perceptron in the
noise-less case. 
\begin{theorem}
Suppose that the ``Update by Disagreement'' algorithm is run on a
sequence of random $N$ examples from $\tilde{\mathcal{D}}$, and with
initial vectors $\w{1}_0, \w{2}_0$. Denote $K = \max_i \|\w{i}_0\|$. 
Let $T$ be the number of updates performed by the ``Update by Disagreement'' algorithm. \\ 
Then, 
$
\E[T] \le \frac{3\,(4\,K +1)}{(1-2\mu)^2}\, \|w^*\|^2
$
where the expectation is w.r.t. the randomness
of sampling from $\tilde{\mathcal{D}}$.
\end{theorem}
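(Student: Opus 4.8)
The plan is to adapt the classical perceptron mistake-bound argument — tracking the growth of the inner product with $w^\ast$ against the growth of the squared norm — but to run it simultaneously on both predictors and to handle the fact that, on a given update step, the label we train on may be the flipped one. Let $\w{1}_t, \w{2}_t$ denote the two vectors after $t$ updates (we only advance the index on steps where $S \neq \emptyset$, i.e. where $h_1(x) \neq h_2(x)$; when $b=1$ the set $S$ is either empty or a single example). On an update step with example $(x,\tilde y)$, both vectors receive the increment $\tilde y\, x$. The key structural observation is that, because $h_1$ and $h_2$ disagree on $x$, exactly one of them has $\sign(\inner{\w{i}_t,x}) \neq \tilde y$ and the other has $\sign(\inner{\w{i}_t,x}) = \tilde y$ — regardless of whether $\tilde y$ is the clean label or the noisy one. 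So on every update, exactly one of the two perceptrons is making a genuine "mistake-style" update (prediction disagrees with the presented label) and the other is making a "wrong-direction" update.

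**Potential function and the two estimates.** I would work with the combined potential $\Phi_t = \inner{w^\ast, \w{1}_t + \w{2}_t}$ and the combined squared norm $\Psi_t = \|\w{1}_t\|^2 + \|\w{2}_t\|^2$. For $\Psi_t$: each update adds $2\tilde y \inner{\w{i}_t, x} + \|x\|^2 \le 2\tilde y\inner{\w{i}_t,x} + 1$ to the $i$-th norm; summing over $i=1,2$, the cross terms are $2\tilde y(\inner{\w{1}_t,x} + \inner{\w{2}_t,x})$, and since the two perceptrons disagree, these two inner products have opposite signs, so this combined cross term is bounded by $2\cdot(\text{the one with } \tilde y\inner{\cdot,x}\le 0 \text{ contributes } \le 0)$ plus the positive one which is at most... — here one must be a little careful, but morally the disagreement forces near-cancellation, giving $\Psi_{t} \le \Psi_{t-1} + O(1)$ per update (this is where the constant $4K+1$ type terms, and the initialization norm $K$, enter). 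For $\Phi_t$: each update adds $\tilde y\inner{w^\ast,x}$ to each of the two terms. Now I take expectations. Conditioning on $x$ and on the clean label $y$, the presented label $\tilde y$ equals $y$ with probability $1-\mu$ and $-y$ with probability $\mu$, so $\E[\tilde y \mid x,y] = (1-2\mu)y$, hence $\E[\tilde y \inner{w^\ast,x} \mid x,y] = (1-2\mu)\, y\inner{w^\ast,x} \ge (1-2\mu)$ by the margin assumption $y\inner{w^\ast,x}\ge 1$. So in expectation each update increases $\Phi$ by at least $2(1-2\mu)$ — but the subtlety is that whether an update happens at step $t$ (i.e. whether $h_1,h_2$ disagree on the freshly drawn example) is itself correlated with the draw; this needs to be handled by a stopping-time / optional-stopping style argument, bounding $\E[\Phi_T]$ and $\E[\Psi_T]$ where $T$ is the (random) number of updates.

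**Combining.** Standard Cauchy–Schwarz gives $\Phi_t \le \|w^\ast\|\sqrt{2\Psi_t}$ (or one can bound each term separately). After $T$ updates, $\E[\Phi_T] \gtrsim (1-2\mu)\E[T]$ (plus a contribution from the initial $\Phi_0$, which may be negative and is where the $K$ shows up) while $\E[\Psi_T] \lesssim \Psi_0 + c\,\E[T]$ with $c$ an absolute constant. Feeding the norm bound into Cauchy–Schwarz and squaring produces $\big((1-2\mu)\E[T] - O(K)\big)^2 \lesssim \|w^\ast\|^2\big(\E[T] + O(K)\big)$, a quadratic inequality in $\E[T]$ whose solution is $\E[T] = O\!\big(\frac{(4K+1)\|w^\ast\|^2}{(1-2\mu)^2}\big)$, matching the claimed bound up to the explicit constant $3$. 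One technical point: to pass from "per-update expected increments" to "expectation at the random stopping time $T$" cleanly, I would either define a martingale/supermartingale from $\Phi$ and $\Psi$ and apply optional stopping, or argue directly that $\E[T] < \infty$ first (using the $\Psi$ bound, which holds surely) and then take expectations term-by-term in the telescoped sums.

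**Main obstacle.** I expect the delicate part to be the $\Psi$ (squared-norm) bound: one must verify that the disagreement condition really does force the combined cross term $2\tilde y(\inner{\w{1}_{t},x}+\inner{\w{2}_{t},x})$ to be controlled, and track exactly how the initialization norms $K$ propagate — because a priori one of $\inner{\w{i}_t,x}$ could be large and positive even while the perceptrons disagree in sign, so the naive "cancellation" needs an honest argument (possibly bounding $|\inner{\w{i}_t,x}|$ via $\|\w{i}_t\| \le K + t$, which feeds back into the recursion and explains the linear-in-$K$ shape of the constant). The expectation-at-a-stopping-time bookkeeping is the second source of friction but is routine given finiteness of $\E[T]$.
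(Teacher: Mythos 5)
Your overall architecture matches the paper's: a perceptron-style argument comparing the growth of $\inner{w^*,\w{i}_t}$ (lower-bounded in expectation, using $\E[\tilde y \mid x,y]=(1-2\mu)y$ and the margin) with the growth of the squared norms, then solving a quadratic inequality in $\E[T]$. The noise/update correlation you worry about is also handled more simply than you suggest: indexing by the $N$ iterations (not by updates) and writing $T=\sum_{t=1}^N \tau_t$, the indicator $\tau_t$ depends only on the history and on $x_t$, hence is independent of the fresh flip variable $\theta_t$; so $\E[\tau_t\theta_t v_t]=(1-2\mu)\E[\tau_t v_t]\ge(1-2\mu)\E[\tau_t]$ term by term, and no optional-stopping machinery is needed.

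The genuine gap is exactly the step you flag as the ``main obstacle,'' and your proposed patch does not close it. Disagreement in sign alone does \emph{not} force near-cancellation of the cross term: one can have $\inner{\w{1}_t,x}$ slightly negative and $\inner{\w{2}_t,x}$ arbitrarily large and positive. The missing idea is that both predictors always receive the identical increment $\tilde y_t x_t$, so the difference vector is invariant: $\w{1}_t-\w{2}_t=\w{1}_0-\w{2}_0$ for all $t$, hence $\|\w{1}_t-\w{2}_t\|\le 2K$ forever. On an update, the predictor whose sign disagrees with $\tilde y_t$ has $\tilde y_t\inner{\w{i}_t,x_t}\le 0$, and the other one is controlled by
\[
\tilde y_t\inner{\w{j}_t,x_t} \;=\; \tilde y_t\inner{\w{i}_t,x_t} + \tilde y_t\inner{\w{j}_t-\w{i}_t,x_t} \;\le\; \|\w{1}_0-\w{2}_0\| \;\le\; 2K ,
\]
which is what yields the per-update norm increase of at most $2\|\w{1}_0-\w{2}_0\|+1\le 4K+1$, uniformly in $t$, and hence the stated $(4K+1)$ factor. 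Your fallback of bounding $|\inner{\w{i}_t,x}|\le\|\w{i}_t\|\le K+t$ gives a per-update increase growing linearly in $t$, so $\E[\|\w{i}_t\|^2]$ grows like $T^2$; the upper bound from Cauchy--Schwarz then grows like $\|w^*\|\,T$, which (since $\|w^*\|\ge 1 > 1-2\mu$) can never be beaten by the lower bound $(1-2\mu)T-O(K)$, and no finite bound on $\E[T]$ follows. Without the invariance-of-the-difference observation, the argument as proposed does not prove the theorem.
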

\begin{proof}
It will be more convenient to rewrite the algorithm as follows. We
perform $N$ iterations, where at iteration $t$ we receive
$(x_t,\tilde{y}_t)$, and update
$
\w{i}_{t+1} = \w{i}_t + \tau_t\,\tilde{y}_t\,x_t ~,
$
where
\[
\tau_t = \begin{cases}
1 & ~\mathrm{if}~ \sign(\inner{\w{1}_t,x_t}) \neq \sign(\inner{\w{2}_t,x_t}) \\
0 & ~\mathrm{otherwise}
\end{cases}
\]
Observe that we can write $\tilde{y}_t = \theta_t y_t$, where
$(x_t,y_t) \sim \mathcal{D}$, and $\theta_t$ is a random variables
with $\prob[\theta_t = 1] = 1-\mu$ and $\prob[\theta_t = -1] = \mu$.
We also use the notation $v_t = y_t \inner{w^*,x_t}$ and
$\tilde{v}_t = \theta_t v_t$. Our goal is to upper bound
$\bar{T} := \E[T] = \E[\sum_t \tau_t]$.

We start with showing that
\begin{equation} \label{eqn:lemma1}
\mathbb{E}\left[\sum_{t=1}^{N}\tau_{t}\tilde{v}_{t}\right]\ge(1-2\mu)\overline{T}
\end{equation}
Indeed, since $\theta_t$ is independent of $\tau_t$ and $v_t$, we get
that:
\begin{align*}
\E[\tau_t \tilde{v}_t] &= \E[\tau_t \theta_t v_t] = \E[\theta_t] \cdot
\E[\tau_t v_t]  
= (1-2\mu) \E[\tau_t v_t] \ge (1-2\mu) \E[\tau_t] 
\end{align*}
where in the last inequality we used the fact that $v_t \ge 1$ with
probability $1$ and $\tau_t$ is non-negative. Summing over $t$ we
obtain that \eqref{eqn:lemma1} holds. 

Next, we show that for $i \in \{1,2\}$, 
\begin{equation} \label{eqn:lemma2}
\| \w{i}_t \|^{2} \le \| \w{i}_0\|^{2} +
\sum_{t=1}^{N} \tau_{t}(2 \| \w{2}_{0}-\w{1}_{0}\| +1)
\end{equation}
Indeed, since the update of $\w{1}_{t+1}$ and $\w{2}_{t+1}$ is
identical, we have that $\|\w{1}_{t+1} - \w{2}_{t+1}\| = \|\w{1}_0 -
\w{2}_0\|$ for every $t$. Now, whenever $\tau_t = 1$ we have that 
either $y_t \inner{\w{1}_{t-1},x_t} \le 0$ or $y_t
\inner{\w{2}_{t-1},x_t} \le 0$. Assume w.l.o.g. that  $y_t
\inner{\w{1}_{t-1},x_t} \le 0$. Then, 
\begin{align*}
\| \w{1}_{t}\|^2 &= \| \w{1}_{t-1}+y_{t}x_{t}\|^{2}
= \|\w{1}_{t-1}\|^2 + 2 y_t \inner{\w{1}_{t-1},x_t} + \|x_t\|^2
\le \|\w{1}_{t-1}\|^2 + 1 
\end{align*}
Second,
\begin{align*}
\| \w{2}_{t}\|^2 &= \| \w{2}_{t-1}+y_{t}x_{t}\|^{2}
= \|\w{2}_{t-1}\|^2 + 2 y_t \inner{\w{2}_{t-1},x_t} + \|x_t\|^2 \\
&\le \|\w{2}_{t-1}\|^2 + 2 y_t \inner{\w{2}_{t-1}-\w{1}_{t-1},x_t} + \|x_t\|^2 \\
&\le \|\w{2}_{t-1}\|^2 + 2 \,\|\w{2}_{t-1}-\w{1}_{t-1}\| + 1
= \|\w{2}_{t-1}\|^2 + 2 \,\|\w{2}_0-\w{1}_0\| + 1 
\end{align*}
Therefore, the above two equations imply 
$
\forall i \in \{1,2\},~
\| \w{i}_{t}\|^2 \le \|\w{i}_{t-1}\|^2 + 2 \,\|\w{2}_0-\w{1}_0\| + 1 
$.
Summing over $t$ we obtain that \eqref{eqn:lemma2} holds. 

Equipped with \eqref{eqn:lemma1} and \eqref{eqn:lemma2} we are ready to
prove the theorem. \\
Denote $K = \max_i \|\w{i}_0\|$ and note that 
$\|\w{2}_0-\w{1}_0\| \le 2 K$. 
We prove the theorem by providing upper and lower bounds on
$\E[\inner{\w{i}_t,w^*}]$. 
Combining the update rule with \eqref{eqn:lemma1} we get:
\begin{align*}
\E[\inner{\w{i}_t,w^*}] &= \inner{\w{i}_0,w^*} + \E\left[\sum_{t=1}^N \tau_t\,
                          \tilde{v}_t\right]
\ge \inner{\w{i}_0,w^*} + (1-2\mu)\bar{T}
\ge -K\,\|w^*\| + (1-2\mu)\bar{T}
\end{align*}
To construct an upper bound, first note that \eqref{eqn:lemma2}  implies that
\begin{align*}
\E[\|\w{i}_t\|^2] &\le \|\w{i}_0\|^2 + (2 \| \w{2}_{0}-\w{1}_{0}\| +1)
\bar{T}
\le K^2 + (4\,K +1) \,\bar{T}
\end{align*}
Using the above and Jensen's inequality, we get that
\begin{align*}
\E[\inner{\w{i}_t,w^*}] &\le \E[\|\w{i}_t\|\,\|w^*\|] \le
\|w^*\|\,\sqrt{\E[\|\w{i}_t\|^2]}
\le \|w^*\|\,\sqrt{  K^2 + (4\,K +1) \bar{T}}
\end{align*}
Comparing the upper and lower bounds, we obtain that
\[
-K\,\|w^*\| + (1-2\mu)\bar{T}
\le 
\|w^*\|\,\sqrt{  K^2 + (4\,K +1) \bar{T}}
\]
Using $\sqrt{a+b} \le \sqrt{a} + \sqrt{b}$, the above implies that
\[
(1-2\mu)\bar{T} - \|w^*\|\,\sqrt{(4\,K +1)}\,\sqrt{\bar{T}} -
2\,K\,\|w^*\| \le 0
\]
Denote $\alpha = \|w^*\|\,\sqrt{(4\,K +1)}$, then the above also
implies that
$
(1-2\mu)\bar{T} - \alpha\,\sqrt{\bar{T}} -
\alpha \le 0
$. \\
Denote $\beta = \alpha/(1-2\mu)$, using standard algebraic
manipulations, the above implies that
\[
\bar{T} ~\le~ \beta + \beta^2 + \beta^{1.5} \le 3\,\beta^2 ~,
\]
where we used the fact that $\|w^*\|$ must be at least $1$ for the
separability assumption to hold, hence $\beta \ge 1$. This concludes our proof. 
\end{proof}

The above theorem tells us that our algorithm converges quickly.  We
next address the second question, regarding the quality of the point
to which the algorithm converges. As mentioned in the introduction,
the convergence must depend on the initial predictors. Indeed, if
$\w{1}_0 = \w{2}_0$, then the algorithm will not make any
updates. The next question is what happens if we initialize $\w{1}_0$
and $\w{2}_0$ at random. The lemma below shows that this does not
suffice to ensure convergence to the optimum, even if the data is
linearly separable without noise. The proof for this lemma is given in \appref{app:proofs}.

\begin{lemma} \label{lem:lower_bound1} Fix some $\delta \in (0,1)$ and
  let $d$ be an integer greater than $40\,\log(1/\delta)$.  There
  exists a distribution over $\reals^d \times \{\pm 1\}$, which is
  separable by a weight vector $w^*$ for which $\|w^*\|^2 = d$, such
  that running the ``Update by Disagreement'' algorithm, with the
  perceptron as the underlying update rule, and with every coordinate
  of $\w{1}_0,\w{2}_0$ initialized according to any symmetric
  distribution over $\reals$, will yield a solution whose error is at
  least $1/8$, with probability of at least $1-\delta$.
\end{lemma}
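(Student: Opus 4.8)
The plan is to exhibit an explicit distribution on which the disagreement region between two randomly-initialized perceptrons never shrinks enough, so that a constant fraction of the mass lies in a region where at least one of the two predictors is wrong and no update ever corrects it. The natural candidate is a product-type distribution concentrated near the vertices of a scaled hypercube: take $x$ to be (a rescaling of) a uniform $\pm$ sign vector $\sigma \in \{\pm 1\}^d$, with label $y = \mathrm{sign}(\inner{w^*,x})$ for $w^* = (1,\dots,1)$, so $\|w^*\|^2 = d$ and $y\inner{w^*,x} = \sum_j \sigma_j \ge 1$ only if we are slightly careful — one typically fixes one coordinate (say $x_1$) to carry the label deterministically and lets the remaining $d-1$ coordinates be free noise coordinates, with $x$ scaled by $1/\sqrt{d}$ so that $\|x\|\le 1$. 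The key structural fact is that the perceptron update $w \leftarrow w + yx$ adds the \emph{same} vector to both $\w{1}$ and $\w{2}$, so the difference $\w{1}_t - \w{2}_t$ is frozen at its initial value $\Delta := \w{1}_0 - \w{2}_0$ for all $t$. Hence disagreement on an example $x$ is equivalent to $\mathrm{sign}(\inner{\w{1}_t,x}) \ne \mathrm{sign}(\inner{\w{2}_t,x})$, which forces $|\inner{\w{1}_t,x}|$ and $|\inner{\w{2}_t,x}|$ to both be at most $|\inner{\Delta,x}|$.

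The heart of the argument is then a concentration estimate on $\inner{\Delta,x}$ over the noise coordinates. Since each coordinate of $\w{1}_0$ and $\w{2}_0$ is drawn from a symmetric distribution, each coordinate $\Delta_j$ is symmetric about $0$; conditioning on the magnitudes $|\Delta_j|$, the quantity $\inner{\Delta,x} = \frac{1}{\sqrt d}\sum_j \pm |\Delta_j| \epsilon_j$ (with $\epsilon_j$ the sign of $x_j$, independent of $\Delta$) is a sum of independent symmetric terms. By a Paley–Zygmund / anti-concentration argument — or more simply by a Berry–Esseen or Chebyshev-type bound, or the observation that the sign pattern of $x$ is uniform and independent of $\Delta$ — one shows that with probability at least $1-\delta$ over the initialization, the disagreement set $\{x : |\inner{\Delta,x}| \text{ small}\}$ has probability bounded away from $1$; equivalently, a constant fraction (at least $1/4$, say) of the hypercube vertices satisfy $|\inner{\Delta,x}| < \gamma$ for the relevant threshold $\gamma$, OR the complementary event. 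The condition $d > 40\log(1/\delta)$ is exactly what is needed to make the relevant Hoeffding/Chernoff bound on the sign sum beat $\delta$. On that constant-mass region no updates ever occur once the two predictors agree there, and one then checks that on at least half of that region the common (or one of the two) sign predictions disagrees with $y = \mathrm{sign}(x_1)$ — because the free coordinates carry no information about $x_1$, any fixed linear predictor errs on a $\ge 1/2$ fraction of the free-coordinate randomness restricted to that region. Multiplying the two constant fractions gives an error $\ge 1/8$.

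The main obstacle, and the step requiring the most care, is the \emph{anti-concentration / two-sided control of the disagreement region}: one must simultaneously argue (i) that with high probability over the random initialization the region where the two predictors can still disagree is not too large (so that a large region is "locked in" and never updated), and (ii) that on the locked-in region the predictors are genuinely wrong on a constant fraction — and these two requirements pull in opposite directions, so the distribution and the threshold $\gamma$ must be tuned so that both hold with the stated constants. A clean way to handle (i) is to note that regardless of $\Delta$, for a uniformly random vertex $x$ the sign $\mathrm{sign}(\inner{\w{i}_t,x})$ is itself "balanced enough" that the two predictors agree on a set of vertices of size $\ge$ a constant fraction \emph{and} disagree on no more than a constant fraction away from $1$; pinning down the exact constant ($1/8$) and the exact threshold on $d$ is then a routine Chernoff computation that I would defer to the appendix. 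I expect the rest — verifying $\|w^*\|^2 = d$, separability with margin $1$, and $\|x\| \le 1$ after the $1/\sqrt d$ rescaling — to be immediate.
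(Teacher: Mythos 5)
There is a genuine gap. Your central claim --- that once the two predictors agree on a region of constant mass, ``no updates ever occur'' there and the error is locked in --- is not justified for the hypercube-type distribution you propose. It is true that $\w{1}_t-\w{2}_t$ is frozen at $\Delta=\w{1}_0-\w{2}_0$, but the individual vectors $\w{1}_t,\w{2}_t$ keep moving, and on your distribution the examples are far from orthogonal: an update performed on one vertex $x$ shifts $\inner{\w{i}_t,x'}$ for essentially every other vertex $x'$. Disagreement on $x'$ is the event that $\inner{\w{1}_t,x'}$ lies between $0$ and $\inner{\Delta,x'}$ (up to sign), and this window can be entered later even if the predictors agree on $x'$ at initialization; so the ``locked-in'' region is not static and the error need not persist. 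In addition, the two quantitative steps you acknowledge as the crux --- (i) the disagreement region has mass bounded away from $1$ with probability $1-\delta$ over the initialization, and (ii) on the agreed-but-never-updated region a constant fraction of predictions are wrong --- are deferred rather than proved, and your heuristic for (ii) (``any fixed linear predictor errs on a $\ge 1/2$ fraction'') is false as stated: a predictor with a dominant weight on the label-carrying coordinate errs on far less than half. So as written the proposal does not yield the $1/8$ error bound.

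The paper avoids all of this by choosing the instance distribution to be uniform over the standard basis vectors $e_1,\ldots,e_d$, with $w^*\in\{\pm 1\}^d$. Orthogonality makes the dynamics coordinate-local: the $i$'th coordinates of $\w{1}_t,\w{2}_t$ can only change when the example is $e_i$ itself, and such an update happens only if the two predictors disagree on $e_i$. By symmetry of the initialization, with probability $1/4$ the signs of $\inner{\w{1}_0,e_i}$ and $\inner{\w{2}_0,e_i}$ agree with each other but disagree with $\inner{w^*,e_i}$; on that event coordinate $i$ is genuinely frozen forever and both predictors err on $e_i$ no matter how long you run. The error is then at least $\frac{1}{d}\sum_i Z_i$ with $Z_i$ i.i.d.\ Bernoulli$(1/4)$, and a single Chernoff bound (this is where $d>40\log(1/\delta)$ enters) gives error at least $1/8$ with probability $1-\delta$. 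If you want to salvage your approach, the cleanest fix is precisely to replace the hypercube by an orthogonal instance set so that agreement regions really are invariant under updates --- which is the paper's construction.
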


Trying to circumvent the lower bound given in the above lemma, one may
wonder what would happen if we will initialize $\w{1}_0,\w{2}_0$
differently. Intuitively, maybe noisy labels are not such a big
problem at the beginning of the learning process. Therefore, we can
initialize $\w{1}_0,\w{2}_0$ by running the vanilla perceptron for
several iterations, and only then switch to our algorithm. Trivially,
for the distribution we constructed in the proof of
\lemref{lem:lower_bound1}, this approach will work just because in the
noise-free setting, both $\w{1}_0$ and $\w{2}_0$ will converge to
vectors that give the same predictions as $w^*$. But, what would
happen in the noisy setting, when we flip the label of every example
with probability of $\mu$? The lemma below shows that the error of the
resulting solution is likely to be order of $\mu^3$. Here again,
the proof is given in \appref{app:proofs}.

\begin{lemma} \label{lem:lower_bound2} Consider a vector
  $w^* \in \{\pm 1\}^d$ and the distribution $\tilde{\mathcal{D}}$ over
  $\reals^d \times \{\pm 1\}$ such that to sample a pair
  $(x,\tilde{y})$ we first choose $x$ uniformly at random from
  $\{e_1,\ldots,e_d\}$, set $y = \inner{w^*,e_i}$, and set $\tilde{y}
  = y$ with probability $1-\mu$ and $\tilde{y} = -y$ with probability
  $\mu$. Let $\w{1}_0,\w{2}_0$ be the result of running the vanilla
  perceptron algorithm on random examples from $\tilde{\mathcal{D}}$
  for any number of iterations. Suppose that we run the ``Update by
  Disagreement'' algorithm for an additional arbitrary number of
  iterations. Then, the error of the solution is likely to be
  $\Omega(\mu^3)$. 
\end{lemma}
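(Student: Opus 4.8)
The plan is to exploit the product structure of $\tilde{\mathcal{D}}$. Since every instance is a standard basis vector $e_i$, an update triggered by $e_i$ changes only coordinate $i$, so the whole run — the vanilla perceptron warm‑up and then ``Update by Disagreement'' — splits into $d$ independent one–dimensional processes, one per coordinate, and the $0$--$1$ error of any returned vector $w$ is exactly $\frac1d\sum_{i=1}^d \mathbf{1}[\sign(w_i)\ne w^*_i]$. By the sign symmetry of the construction it is enough to understand a single coordinate with, say, $w^*_i=1$. The key structural point that tames the ``additional arbitrary number of iterations'' is that during ``Update by Disagreement'' both predictors receive the identical increment $\tilde y_t e_i$, so $\w{1}_i-\w{2}_i$ stays constant; consequently a coordinate on which $\sign(\w{1}_i)=\sign(\w{2}_i)$ is never updated again. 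Hence a coordinate that is ``agreed upon and wrong'' at the end of the warm‑up remains misclassified by \emph{both} predictors for the rest of the run, no matter how long phase~2 lasts and no matter which of the two predictors is returned.

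Next I would analyze the warm‑up on a single $w^*_i=1$ coordinate. Starting from $w_i=0$ and applying the mistake‑driven perceptron rule to instances $(e_i,\tilde y)$, a two‑line check shows $w_i$ stays in $\{-1,0,1\}$: from $w_i=1$ a mistake occurs only on a flipped label and sends $w_i$ to $0$; from $w_i=-1$ a mistake sends it to $0$; from $w_i=0$ it moves to $\pm1$ according to the observed label. Computing the stationary law of this birth--death chain gives mass $\Theta(\mu)$ on the state $0$ and $\Theta(\mu^2)$ on $-1$. What I actually need is a lower bound, uniform over the adversarially chosen warm‑up length $M$, on the probability that coordinate $i$ is left in a configuration that, together with the second run's state, is agreed‑upon and wrong — the simplest such configuration being $\w{1}_i=\w{2}_i=0$. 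This I would get by a short case split: if $M$ is large the two runs are essentially independent draws from the stationary law, giving probability $\Omega(\mu^2)$; if $M$ is small a constant fraction of coordinates has not yet been touched and sits exactly at $(0,0)$; and if the adversary couples the two runs so tightly that $\w{1}_0=\w{2}_0$ then phase~2 is vacuous and the plain perceptron's $\Omega(\mu)$ error already finishes the job. In every case the probability that coordinate $i$ is permanently misclassified is at least $c\,\mu^3$ for an absolute constant $c>0$.

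To conclude I would put the pieces together. The per‑coordinate ``permanently misclassified'' events are independent across $i$ (the one–dimensional processes consume disjoint parts of the sample, and the two iteration counts are fixed in advance), each occurs with probability at least $c\mu^3$, and each such coordinate contributes $1/d$ to the error of whichever vector is returned. A Chernoff bound then yields that at least $\tfrac{c}{2}\mu^3 d$ coordinates are permanently misclassified with probability $1-e^{-\Omega(\mu^3 d)}$, i.e.\ the error is $\Omega(\mu^3)$ with high probability; the remaining misclassified coordinates (active ones, or ones that resolve wrongly during phase~2) are simply discarded, which only weakens the bound.

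I expect the main obstacle to be exactly the two ``arbitrary number of iterations'' quantifiers: the statement must hold when the adversary picks both the warm‑up length and the phase~2 length so as to \emph{minimize} the final error, and possibly correlates the two perceptron runs. The freezing observation of the first paragraph removes the dependence on the phase~2 length, but establishing the uniform‑in‑$M$ lower bound on being left ``agreed‑upon and wrong'' after the warm‑up — rather than merely a stationary estimate — needs the little case analysis above together with some care about the $\sign(0)$ / boundary convention used in the perceptron mistake test; pinning down that convention and checking that the ``stuck‑wrong'' set changes only by lower‑order terms is the fiddly part of the argument.
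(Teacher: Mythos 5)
Your argument is essentially the paper's: both reduce the run to independent per-coordinate three-state Markov chains on $\{-1,0,1\}$ with stationary mass $\Theta(\mu)$ on $0$ and $\Theta(\mu^2)$ on $-1$, observe that once the two predictors agree on a coordinate that coordinate is frozen forever, and extract an order-$\mu^3$ permanently-wrong event (the paper's version being ``both coordinates at $0$ and then a flipped label''). Your case split over the warm-up length and the Chernoff aggregation across coordinates are refinements of the same route -- the paper's sketch simply works at stationarity and, like you, leaves the $\sign(0)$/zero-margin update convention implicit.
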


To summarize, we see that without making additional assumptions on the
data distribution, it is impossible to prove convergence of our
algorithm to a good solution. In the next section we show that for
natural data distributions, our algorithm converges to a very good
solution. 

\section{Experiments}

We now demonstrate the merit of our suggested meta-algorithm using
empirical evaluation. Our main experiment is using our
algorithm with deep networks in a real-world scenario of noisy labels. In particular, we use a hypothesis
class of deep networks and a Stochastic Gradient Descent with momentum
as the basis update rule. The task is classifying face images
according to gender. As training data, we use the Labeled Faces in the
Wild (LFW) dataset for which we had a labeling of the name of the
face, but we did not have gender labeling. To construct gender labels,
we used an external service that provides gender labels based on
names. This process resulted in noisy labels. We show
that our method leads to state-of-the-art results on this task,
compared to competing noise robustness methods.
We also performed controlled experiments to demonstrate our algorithm's
performance with linear classification with varying levels of noise.
Due to the lack of space, these results are detailed in \appref{app:experiments}.

\subsection{Deep Learning}

We have applied our algorithm with a Stochastic Gradient Descent (SGD)
with momentum as the base update rule on the task of labeling images
of faces according to gender. The images were taken from the Labeled
Faces in the Wild (LFW) benchmark \cite{huang2007labeled}. This
benchmark consists of 13,233 images of 5,749 different people
collected from the web, labeled with the name of the person in the
picture. Since the gender of each subject is not provided, we follow
the method of \cite{masek2015evaluation} and use a service that
determines a person's gender by their name (if it is recognized),
along with a confidence level.  This method gives rise to ``natural''
noisy labels due to ``unisex'' names, and therefore allows us to
experiment with a real-world setup of dataset with noisy labels.

\begin{figure}[H]
\centering
\begin{tabular}{lcccc}
\hline 

Name & Kim & Morgan & Joan & Leslie\tabularnewline

Confidence & 88\% &  64\% & 82\% & 88\%\tabularnewline
\hline
\hline 
Correct & \includegraphics[width=0.1\columnwidth]{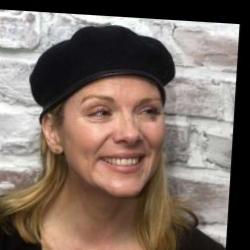} & \includegraphics[width=0.1\columnwidth]{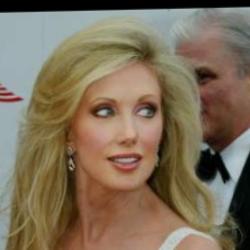} & \includegraphics[width=0.1\columnwidth]{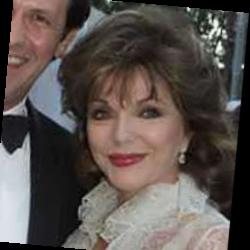} & \includegraphics[width=0.1\columnwidth]{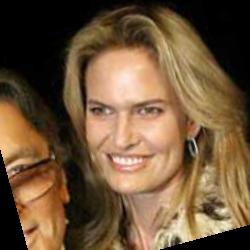}\\

Mislabeled & \includegraphics[width=0.1\columnwidth]{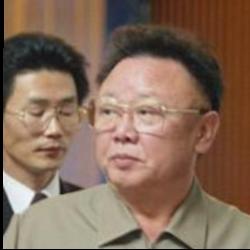} & \includegraphics[width=0.1\columnwidth]{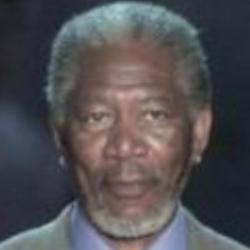} & \includegraphics[width=0.1\columnwidth]{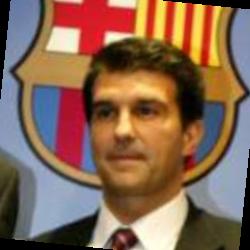} & \includegraphics[width=0.1\columnwidth]{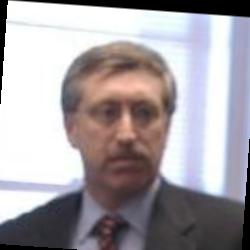}\\
\hline 
\end{tabular}%

\caption{Images from the dataset tagged as female}
\end{figure}

We have constructed train and test sets as follows. We first took all
the individuals on which the gender service gave 100\% confidence. We
divided this set at random into three subsets of equal size, denoted
$N_1,N_2,N_3$. We denote by $N_4$ the individuals on which the confidence
level is in $[90\%,100\%)$, and by $N_5$ the individuals on which the
confidence level is in $[0\%,90\%)$. Needless to say that all the sets
$N_1,\ldots,N_5$ have zero intersection with each other. 

We repeated each experiment three times, where in every time we used a
different $N_i$ as the test set, for $i \in \{1,2,3\}$. Suppose $N_1$
is the test set, then for the training set we used two configurations:
\begin{enumerate}
\item A dataset consisting of all the images that belong to names in
  $N_2,N_3,N_4,N_5$, where unrecognized names were labeled as male
  (since the majority of the subjects in the LFW dataset are males).
\item A dataset consisting of all the images that belong to names in
  $N_2,N_3,N_4$.
\end{enumerate}

We use a network architecture suggested by \cite{levi2015age}, using an available
tensorflow implementation\footnote{\url{https://github.com/dpressel/rude-carnie}.}.
It should be noted that we did not change
any parameters of the network architecture or the optimization process,
and use the default parameters in the implementation. Since the amount
of male and female subjects in the dataset is not balanced, we use
an objective of maximizing the balanced accuracy \cite{brodersen2010balanced}
 - the average accuracy obtained on either class.

 Training is done for 30,000 iterations on 128 examples mini-batch.
 In order to make the networks disagreement meaningful, we initialize
 the two networks by training both of them normally (updating on all
 the examples) until iteration \#5000, where we switch to training
 with the ``Update by Disagreement'' rule. Due to the fact that we are
 not updating on all examples, we decrease the weight of batches that
 had less than 10\% of the original examples in the original batch to
 stabilize the gradients. The exact code with the implementation
 details will be posted online.

 We inspect the balanced accuracy on our test data during the training
 process, comparing our method to a vanilla neural network training,
 as well as to soft and hard bootstrapping described in
 \cite{reed2014training} and to the s-model described in
 \cite{goldberger2016training}, all of which are using the same
 network architecture. We use the initialization parameters for
 \cite{reed2014training,goldberger2016training} that were suggested in
 the original papers. We show that while in other methods, the
 accuracy effectively decreases during the training process due to
 overfitting the noisy labels, in our method this effect is less
 substantial, allowing the network to keep improving. 

 We study two different scenarios, one in which a small clean test
 data is available for model selection, and therefore we can choose
 the iteration with best test accuracy, and a more realistic scenario
 in which there is no clean test data at hand. For the first scenario,
 we observe the balanced accuracy of the best available iteration. For
 the second scenario, we observe the balanced accuracy of the last
 iteration.

 As can be seen in \figref{fig:dnn_graphs} and the supplementary results listed in \tblref{tbl:dnn_resules} in \appref{app:experiments}, our method
 outperforms the other methods in both situations. This is true for
 both datasets, although, as expected, the improvement in performance
 is less substantial on the cleaner dataset.

 The second best algorithm is the s-model described in
 \cite{goldberger2016training}. Since our method can be applied to any
 base algorithm, we also applied our method on top of the
 s-model. This yields even better performance, especially when the
 data is less noisy, where we obtain a significant improvement. 

\begin{figure}[h]
\begin{minipage}[t]{0.5\columnwidth}%
\begin{center}
\includegraphics[width=1\columnwidth]{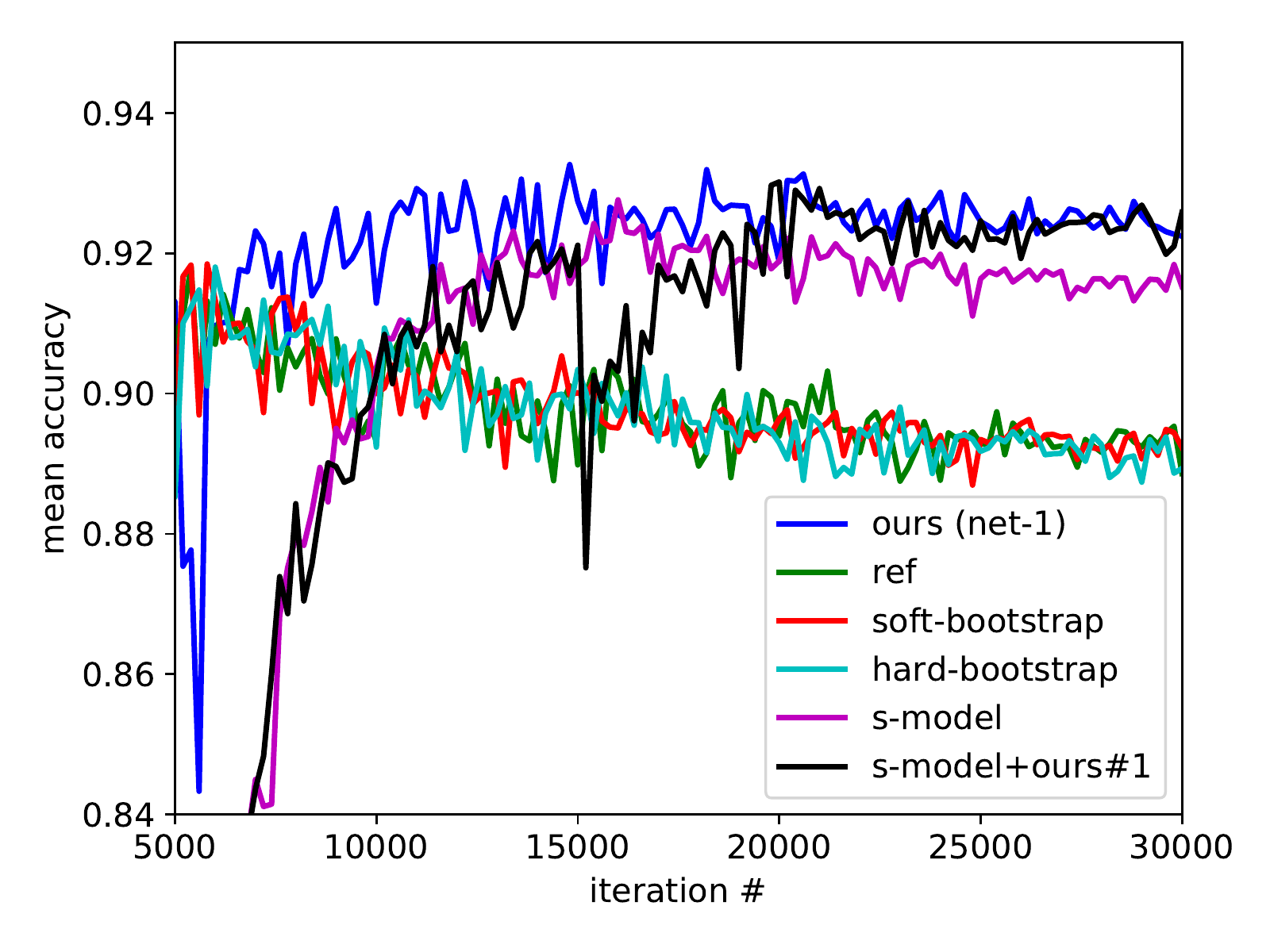}
\par
Dataset \#1 - more noise
\end{center}
\bigskip{}
\end{minipage}\hfill{}%
\begin{minipage}[t]{0.5\columnwidth}%
\begin{center}
\includegraphics[width=1\columnwidth]{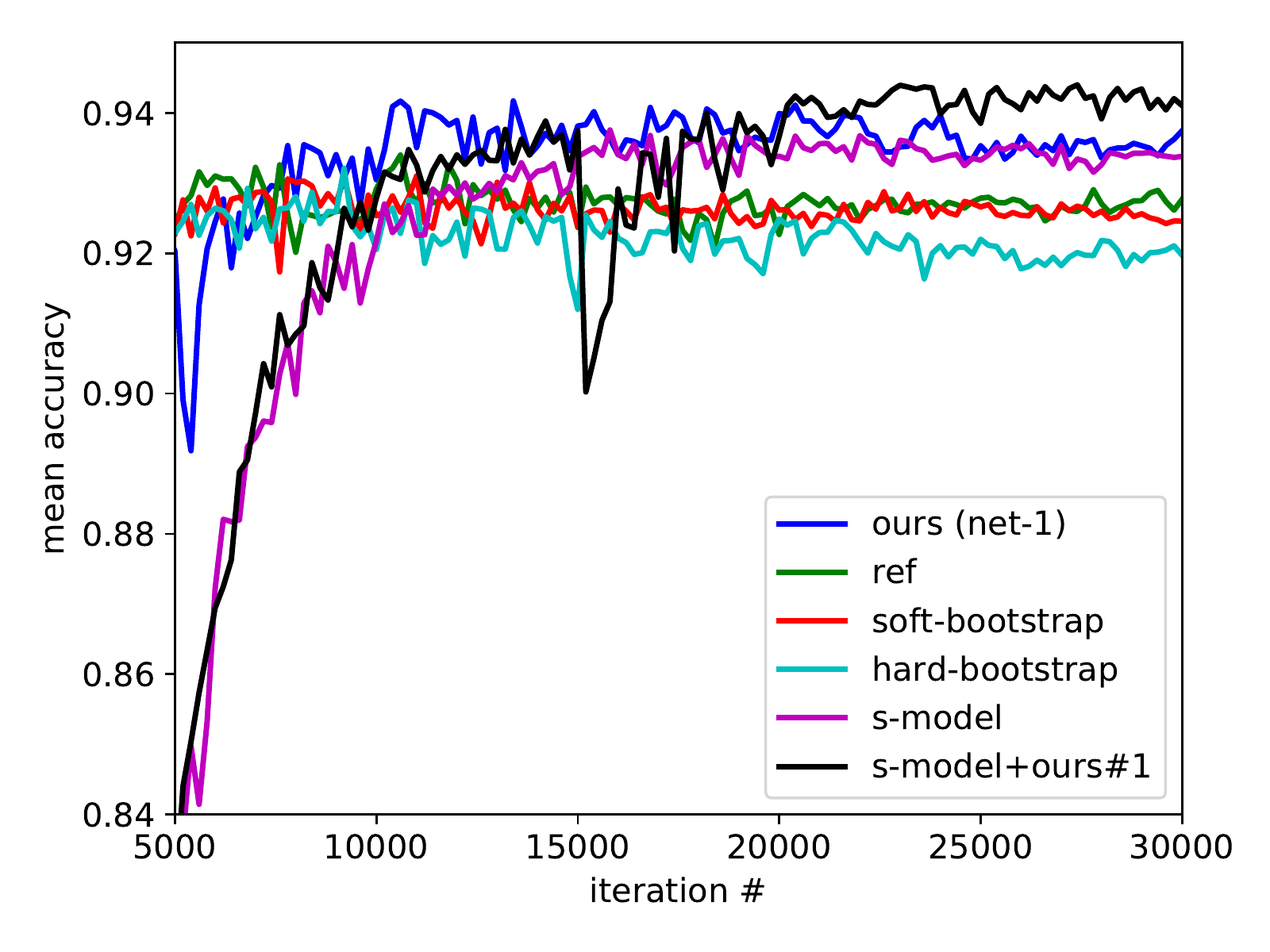}
\par
Dataset \#2 - less noise
\end{center}

\end{minipage}\hfill{}\\
\bigskip{}

\caption{Balanced accuracy of all methods on the clean test data,
when trained on the two different datasets.}

\label{fig:dnn_graphs}
\end{figure}

\section{Discussion}

We have described an extremely simple approach for supervised learning
in the presence of noisy labels. The basic idea is to decouple the
``when to update'' rule from the ``how to update'' rule. We achieve
this by maintaining two predictors, and update based on their
disagreement. We have shown that this simple approach leads to
state-of-the-art results. 

Our theoretical analysis shows that the approach leads to fast
convergence rate when the underlying update rule is the perceptron. We
have also shown that proving that the method converges to an optimal
solution must rely on distributional assumptions. There are several
immediate open questions that we leave to future work. First,
suggesting distributional assumptions that are likely to hold in
practice and proving that the algorithm converges to an optimal
solution under these assumptions. Second, extending the convergence
proof beyond linear predictors. While obtaining absolute convergence
guarantees seems beyond reach at the moment, coming up with oracle
based convergence guarantees may be feasible.

\paragraph{Acknowledgements:} This research is supported by the European Research Council (TheoryDL project).

\clearpage

\bibliography{labelnoise}
\bibliographystyle{plain}

\clearpage

\appendix

\begin{center}
{\LARGE
Supplementary Material
}
\end{center}

\section{Proofs}

\label{app:proofs}
\begin{proof} of \lemref{lem:lower_bound1}: \\
  Let the distribution over instances be concentrated uniformly over
  the vectors of the standard basis, $e_1,\ldots,e_d$. Let $w^*$ be
  any vector in $\{\pm 1\}^d$. Fix some $i$. Then, with probability
  $1/4$ over the choice of $\w{1}_0,\w{2}_0$, we have that the signs
  of $\inner{\w{1}_0,e_i}, \inner{\w{2}_0,e_i}$ agree with each other,
  but disagree with $\inner{w^*,e_i}$. It is easy to see that the
  $i$'th coordinate of $\w{1}$ and $\w{2}$ will never be
  updated. Therefore, no matter how many iterations we will perform,
  the solution will be wrong on $e_i$. It follows that the probability
  of error is lower bounded by the random variable $\frac{1}{d} \sum_{i=1}^d Z_i$,
  the $Z_i$ are i.i.d. Bernoulli variables with $\prob[Z_i=1] =
  1/4$. Using Chernoff's inequality, 
\[
\prob\left[ \frac{1}{d} \sum_{i=1}^d Z_i < 1/8 \right] ~\le~ \exp(-d
C) ~,
\]
where $C = \frac{3}{112}$. It follows that if $d \ge \log(1/\delta)/C$
then with probability of at least $1-\delta$ we will have that the
error of the solution is at least $1/8$. 
\end{proof}

\begin{proof} of \lemref{lem:lower_bound2}: \\
  Let $w_t$ be a random vector indicating the vector of the perceptron
  after $t$ iterations. Fix some $i$ and w.l.o.g. assume that
  $w^*_i = 1$. The value of $w_t$ at the $i$'th coordinate is always
  in the set $\{-1,0,1\}$. Furthermore, it alters its value like a
  Markov chain with a transition matrix of
\[
P = \begin{pmatrix}
\mu & 1-\mu & 0 \\
\mu & 0  & 1-\mu \\
0 & \mu & 1-\mu
\end{pmatrix}
\]
It is easy to verify that the stationary distribution over
$\{-1,0,1\}$ is 
\[
\pi = \left(\frac{\mu^2}{\mu + (1-\mu)^2} , \frac{\mu(1-\mu)}{\mu
    +(1-\mu)^2}, \frac{ (1-\mu)^2}{\mu + (1-\mu)^2}\right) ~.
\]
Now, the probability that our algorithm will fail on the $i$'th
coordinate is lower bounded by the probability that the $i$'th
coordinate of both $\w{1},\w{2}$ will be $0$ and then our algorithm
will see a flipped label. This would happen with probability of order
of $\mu^3$ for a small $\mu$.
\end{proof}

\clearpage

\section{Experimental Results}

\label{app:experiments}

We show our algorithm's performance in two controlled setups, 
using a perceptron based algorithm.
In the first setup we test we run our algorithm
on synthetic data that is generated by
randomly sampling instances from the unit ball in $\mathbb{R}^d$, with
different probabilities for random label-flips.
In the second setup we test our performance on a binary classification
task based on the MNIST dataset, again with random label-flips with
different probabilities.
We show that in both scenarios, our
adaptation of the perceptron algorithm results in resilience for large
noise probabilities, unlike the vanilla perceptron algorithm which
fails to converge on even small amounts of noise.

\subsection{Linear Classification on Synthetic Data}
To test the performance of the suggested perceptron-like algorithm,
we use synthetic data in various dimensions, generated in the following
process:
\begin{enumerate}
\item Randomly choose $w^{*}\in\mathbb{R}^{d}$ with a given norm $\| w^{*}\|=10^{3}$
\begin{enumerate}
\item In each iteration, draw vectors $x\in\mathbb{R}^{d}$ from the uniform distribution
on the unit ball until $|\inner{w^*,x}| \ge 1$, and then set $y =
\sign(\inner{w^*,x})$. 
\item With probability $\mu<0.5$, flip the sign of $y$.
\end{enumerate}
\end{enumerate}
The above was performed for different values of $\mu$, and
repeated 5 times for each setup. In \figref{fig:synthetic} we depict
the average performance over the 5 runs. As can be seen, our algorithm
greatly improves the noise resilience of the vanilla perceptron. 

\begin{figure}[H]
\begin{minipage}[t]{0.48\columnwidth}%
\begin{center}
\includegraphics[width=1\columnwidth]{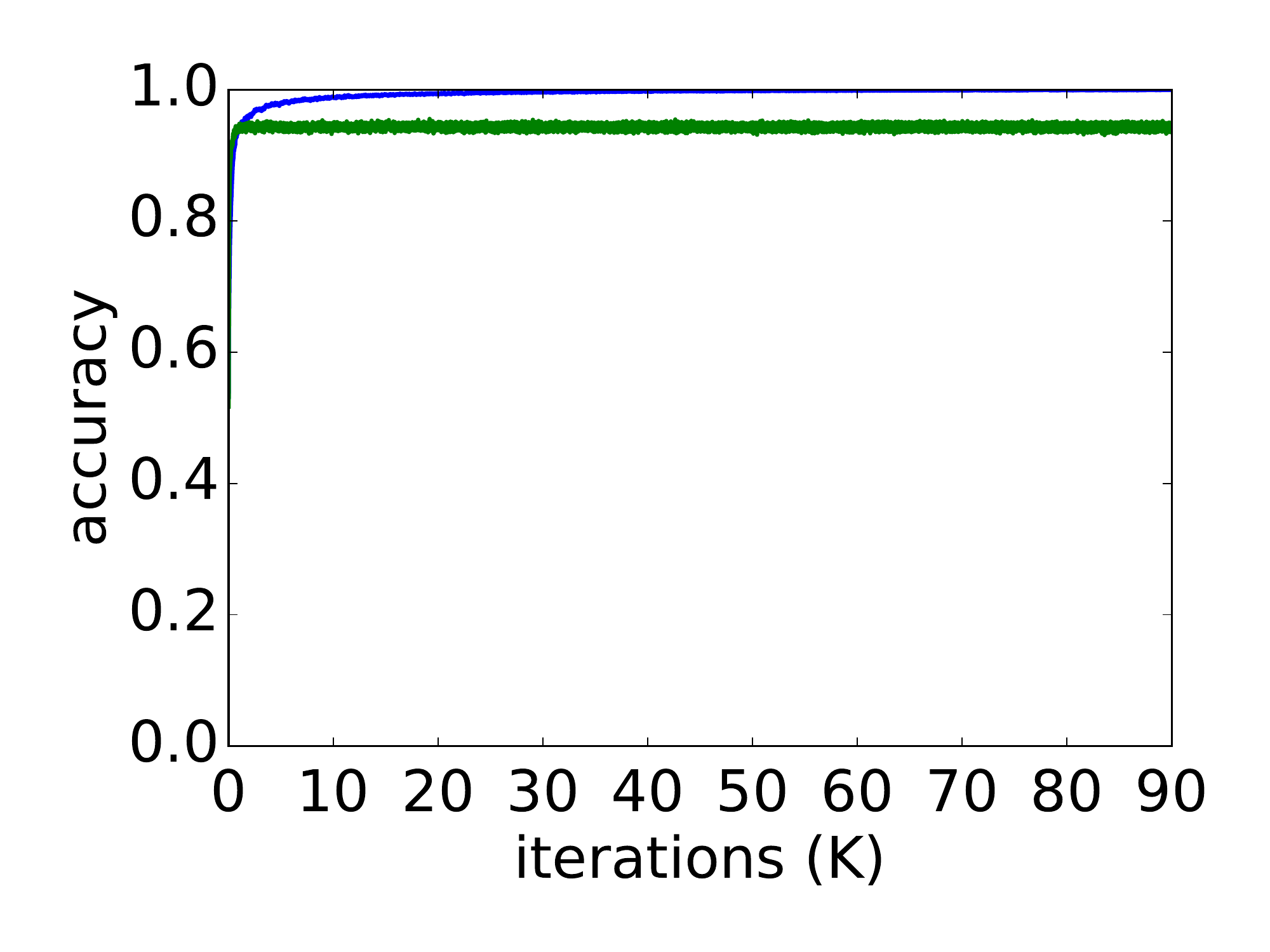}
\par\end{center}

\begin{center}
$d=100,\;\mu=0.01$
\par\end{center}%
\end{minipage}\hfill{}%
\begin{minipage}[t]{0.48\columnwidth}%
\begin{center}
\includegraphics[width=1\columnwidth]{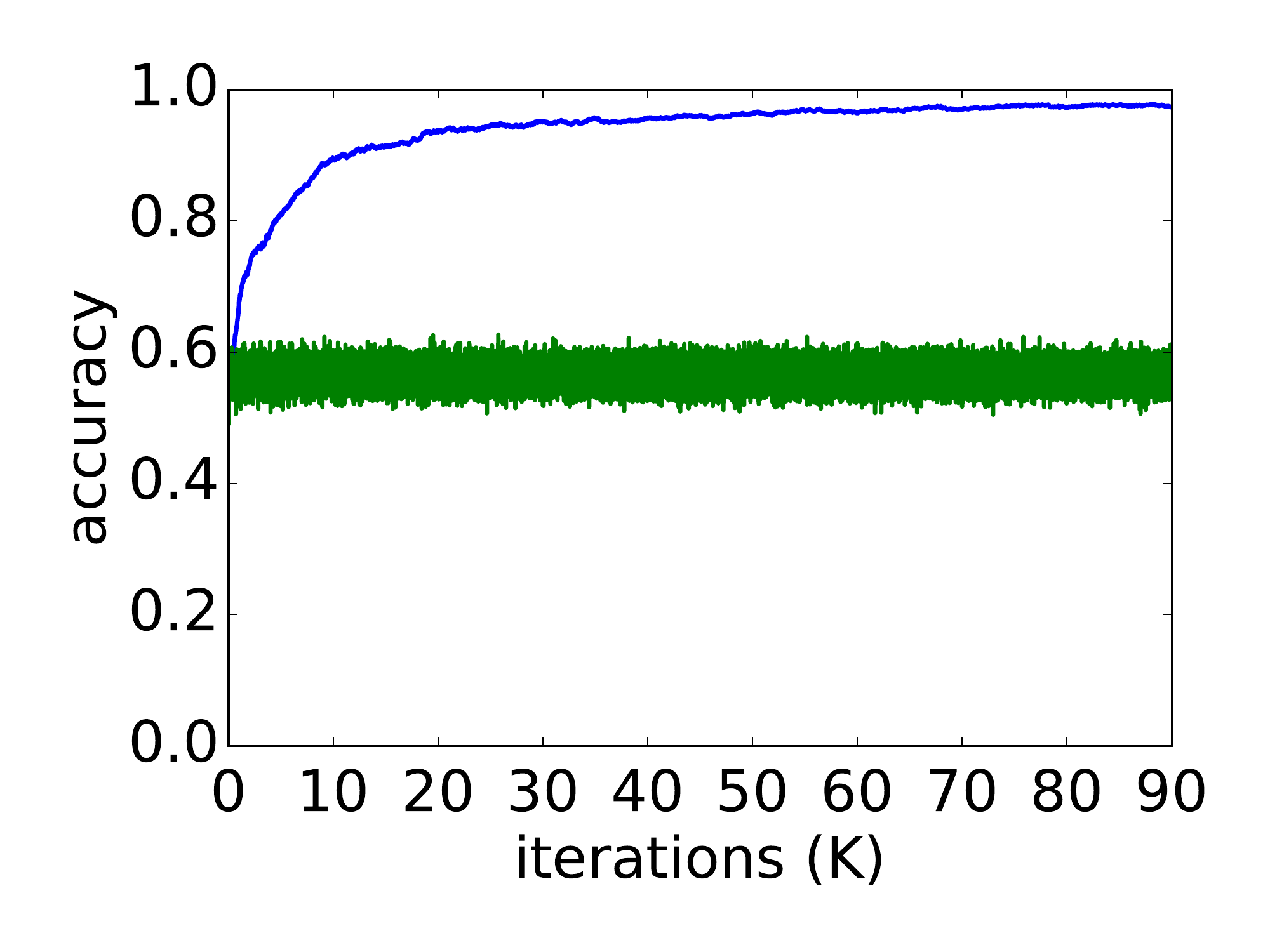}
\par\end{center}

\begin{center}
$d=100,\;\mu=0.4$
\par\end{center}%
\end{minipage}

\caption{Mean accuracy of our algorithm (blue line) compared to a vanilla perceptron
update rule (green line), averaged across 5 randomly initialized training
sessions, testing different noise rate values. Each
iteration is tested against a test set of 10K correctly labeled examples.} \label{fig:synthetic}
\end{figure}

\clearpage

\subsection{Linear Classification on MNIST Data Noisy Labels}

Here we use a binary classification task of discriminating between
the digits 4 and 7, from the MNIST dataset.

We tested the performance of the above algorithm against the regular
perceptron algorithm with various levels of noise.\\
\begin{figure}[H]
\begin{minipage}[t]{0.48\columnwidth}%
\begin{center}
\includegraphics[width=1\columnwidth]{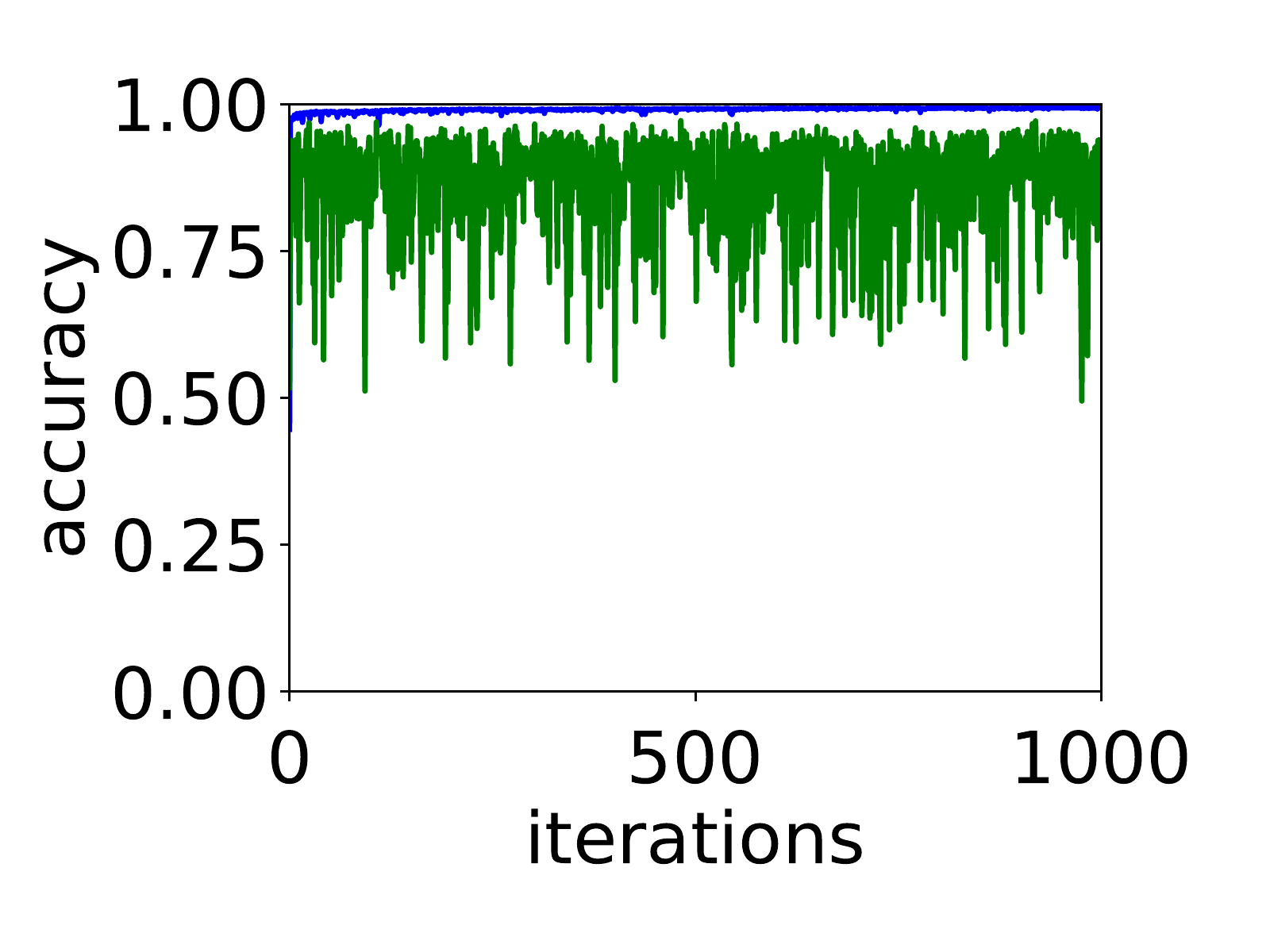}
\par\end{center}

\begin{center}
$\mu=0.1$
\par\end{center}%
\end{minipage}\hfill{}%
\begin{minipage}[t]{0.48\columnwidth}%
\begin{center}
\includegraphics[width=1\columnwidth]{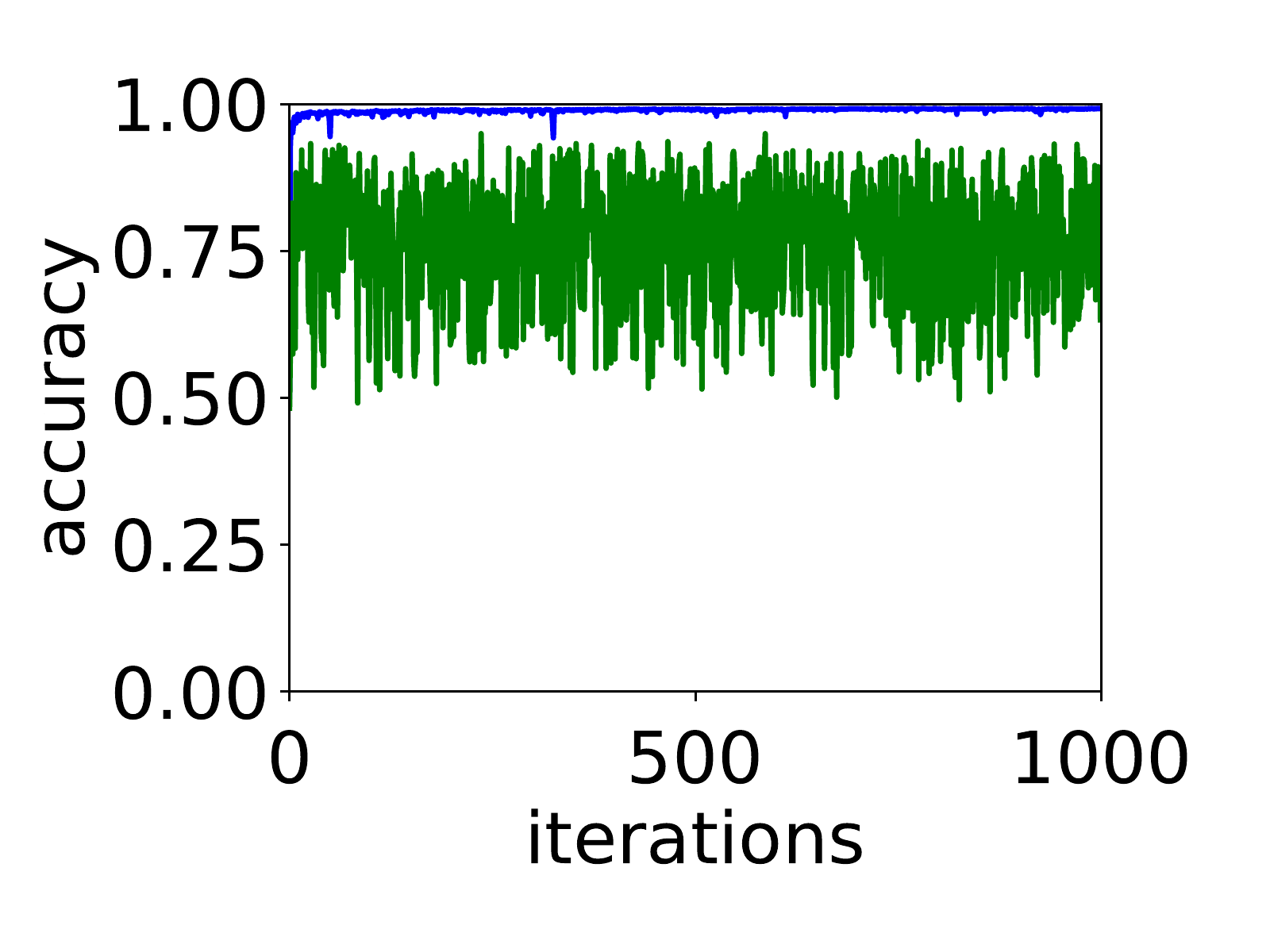}
\par\end{center}

\begin{center}
$\mu=0.2$
\par\end{center}%
\end{minipage}\hfill{}%

\bigskip{}

\begin{minipage}[t]{0.48\columnwidth}%
\begin{center}
\includegraphics[width=1\columnwidth]{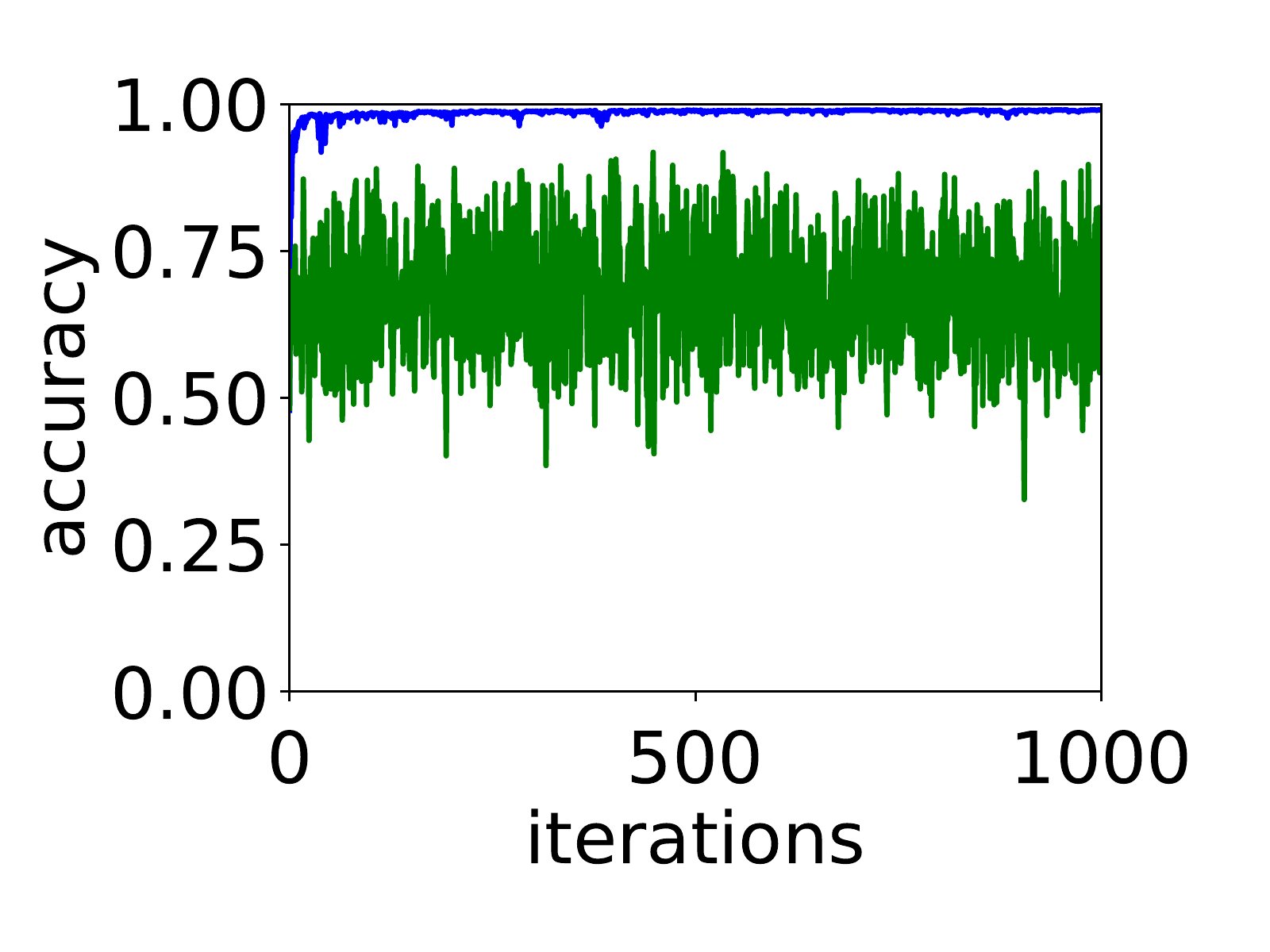}
\par\end{center}

\begin{center}
$\mu=0.3$
\par\end{center}%
\end{minipage}\hfill{}%
\begin{minipage}[t]{0.48\columnwidth}%
\begin{center}
\includegraphics[width=1\columnwidth]{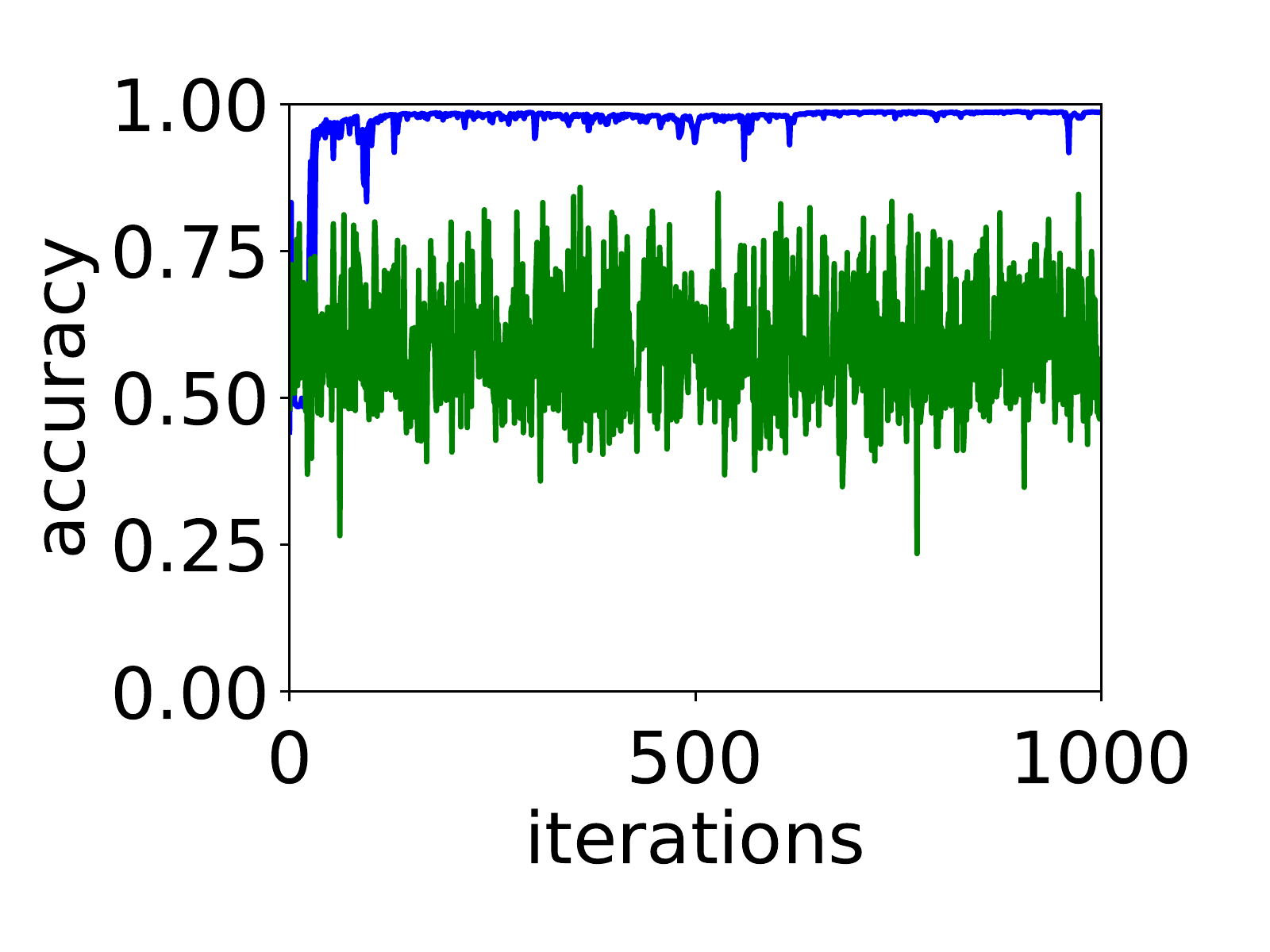}
\par\end{center}

\begin{center}
$\mu=0.4$
\par\end{center}%
\end{minipage}\hfill{}%

\bigskip{}

\begin{minipage}[t]{1\columnwidth}%
\begin{tabular}{lcccccc}
\hline 
$\mu=$ & $0.1$ & $0.2$ & $0.3$ & $0.4$\tabularnewline
\hline 
\hline 
best acc. (ours) & \textbf{99.4} & \textbf{99.2} & \textbf{99.0} & \textbf{98.7}\tabularnewline
\hline 
best acc. (perceptron) & 97.2 & 95.0 & 91.8 & 85.8\tabularnewline
\hline 
mean last 100 iters (ours) & \textbf{99.3} & \textbf{99.1} & \textbf{98.9} & \textbf{98.4}\tabularnewline
\hline 
mean last 100 iters (perc.) & 87.0 & 77.7 & 65.4 & 59.2\tabularnewline
\hline 
\end{tabular}%
\end{minipage}

\caption{Mean accuracy of our algorithm (blue line) compared to a regular perceptron
update rule (green line), with different noise rates. In all training
sessions we performed 1M iterations, randomly drawing examples from
the MNIST train set, and testing the accuracy of both algorithms on
the MNIST test set every 1000 iterations.}
\end{figure}

\clearpage

\subsection{Deep Learning Detailed Results}
The table below details the results of the LFW experiment,
showing the balanced accuracy of all the different methods
for dealing with noisy labels. We show the results on the
best iteration and on the last iteration. We observe that
our method outperforms other alternative, and combining it
with the s-model of \cite{goldberger2016training} results
in an even better improvement.
\begin{table}[h]
\caption{Accuracy on the test data
in the best iteration (with respect to the test data) and the last
iteration, achieved by each method during the training process.}
\vskip 0.15in
\begin{center}
\begin{tabular}{lcccccc}
\hline 
\multirow{2}{*}{Dataset \#1} & \multicolumn{3}{c}{Accuracy (best iteration)} & \multicolumn{3}{c}{Accuracy (last iteration)} \tabularnewline
 & Male & Female & \textbf{Mean} & Male & Female & \textbf{Mean}\tabularnewline
\hline 
\hline 
ours (net \#1) & 94.4 $\pm$ 0.7 & 92.7 $\pm$ 0.2 & \textbf{93.6 $\pm$ 0.2} & 94.8 $\pm$ 0.8 & 89.7 $\pm$ 1.3 & \textbf{92.2 $\pm$ 0.6}\tabularnewline
ours (net \#2) & 93.5 $\pm$ 1.1 & 93.2 $\pm$ 0.6 & 93.4 $\pm$ 0.3 & 93.7 $\pm$ 0.8 & 90.1 $\pm$ 0.9 & 91.9 $\pm$ 0.4\tabularnewline
s-model+ours \#1 & 93.3 $\pm$ 1.7 & 93.8 $\pm$ 1.4 & \textbf{93.6 $\pm$ 0.4} & 93.7 $\pm$ 1.1 & 91.4 $\pm$ 1.0 & \textbf{92.6 $\pm$ 0.1}\tabularnewline
s-model+ours \#2 & 94.2 $\pm$ 0.7 & 91.7 $\pm$ 0.6 & 93.0 $\pm$ 0.2 & 93.6 $\pm$ 1.3 & 91.6 $\pm$ 1.5 & 92.6 $\pm$ 0.1\tabularnewline
\hline 
\hline 
baseline & 91.6 $\pm$ 2.2 & 92.7 $\pm$ 1.8 & 92.2 $\pm$ 0.2 & 94.5 $\pm$ 0.7 & 83.3 $\pm$ 3.2 & 88.9 $\pm$ 1.3\tabularnewline
bootstrap-soft & 92.5 $\pm$ 0.6 & 91.9 $\pm$ 0.6 & 92.2 $\pm$ 0.2 & 94.5 $\pm$ 0.7 & 84.0 $\pm$ 1.7 & 89.2 $\pm$ 0.8\tabularnewline
bootstrap-hard & 92.4 $\pm$ 0.7 & 91.9 $\pm$ 1.0 & 92.1 $\pm$ 0.3 & 94.7 $\pm$ 0.2 & 83.2 $\pm$ 1.7 & 88.9 $\pm$ 0.8\tabularnewline
s-model & 94.5 $\pm$ 0.7 & 91.3 $\pm$ 0.4 & 92.9 $\pm$ 0.5 & 93.3 $\pm$ 2.0 & 89.8 $\pm$ 1.3 & 91.5 $\pm$ 0.4\tabularnewline
\hline 
\end{tabular}%
%
%

\vskip 0.2in

\begin{tabular}{lcccccc}
\hline 
\multirow{2}{*}{Dataset \#2}  & \multicolumn{3}{c}{Accuracy (best iteration)} & \multicolumn{3}{c}{Accuracy (last iteration)}\tabularnewline
 & Male & Female & \textbf{Mean} & Male & Female & \textbf{Mean} \tabularnewline
\hline 
\hline 
ours (net \#1) & 95.5 $\pm$ 0.8 & 93.6 $\pm$ 0.9 & 94.5 $\pm$ 0.2 & 95.4 \textbf{$\pm$} 1.1 & 92.1 \textbf{$\pm$} 0.7 & 93.7 $\pm$ 0.2\tabularnewline
ours (net \#2) & 95.7 $\pm$ 1.5 & 93.0 $\pm$ 1.8 & 94.4 $\pm$ 0.2 & 95.9 \textbf{$\pm$} 0.6 & 91.6 \textbf{$\pm$} 0.6 & 93.7 $\pm$ 0.3\tabularnewline
s-model+ours \#1 & 95.5 $\pm$ 0.5 & 94.0 $\pm$ 0.7 & \textbf{94.8 $\pm$ 0.2} & 95.3 $\pm$ 1.3 & 92.9 $\pm$ 2.2 & \textbf{94.1 $\pm$ 0.4}\tabularnewline
s-model+ours \#2 & 95.1 $\pm$ 0.8 & 93.9 $\pm$ 1.5 & 94.5 $\pm$ 0.3 & 95.6 $\pm$ 1.2 & 92.5 $\pm$ 1.7 & 94.0 $\pm$ 0.2\tabularnewline
\hline 
\hline 
baseline & 93.6 $\pm$ 0.7 & 93.9 $\pm$ 0.8 & 93.8 $\pm$ 0.3 & 96.2 $\pm$ 0.2 & 89.4 $\pm$ 1.6 & 92.8 $\pm$ 0.8\tabularnewline
bootstrap-soft & 94.8 $\pm$ 1.0 & 92.2 $\pm$ 0.6 & 93.5 $\pm$ 0.4 & 96.2 $\pm$ 0.6 & 88.7 $\pm$ 2.0 & 92.5 $\pm$ 0.7\tabularnewline
bootstrap-hard & 93.9 $\pm$ 1.2 & 92.8 $\pm$ 0.7 & 93.4 $\pm$ 0.4 & 96.1 $\pm$ 0.3 & 87.9 $\pm$ 1.6 & 92.0 $\pm$ 0.6\tabularnewline
s-model & 94.8 $\pm$ 1.0 & 93.3 $\pm$ 0.4 & 94.1 $\pm$ 0.3 & 94.5 $\pm$ 0.6 & 92.3 $\pm$ 0.2 & 93.4 $\pm$ 0.4\tabularnewline
\hline 
\end{tabular}%
%
%

\end{center}
\label{tbl:dnn_resules}
\end{table}

\end{document}